\newtheorem{corollary}[theorem]{Corollary}
\newtheorem{remark}{Remark}
\def\SI{\textrm{SI}}
\def\aone{`ABP-M'}
\def\atwo{`ABP-L'}
\def\base{`Mag'}
\title{A Theoretical Understanding of Neural Network Compression from Sparse Linear Approximation}
\author{%
  Wenjing Yang\thanks{Equal contribution.} \\
  School of Statistics\\
  University of Minnesota\\
  Minneapolis, MN 55455 \\
  \texttt{yang2987@umn.edu} \\
   \And
   Ganghua Wang$^*$\\
   School of Statistics\\
  University of Minnesota\\
  Minneapolis, MN 55455 \\
  \texttt{wang9019@umn.edu} \\
   \And
   Jie Ding \\
   School of Statistics\\
  University of Minnesota\\
  Minneapolis, MN 55455 \\
   \texttt{dingj@umn.edu} \\
   \And
   Yuhong Yang \\
   School of Statistics\\
  University of Minnesota\\
  Minneapolis, MN 55455 \\
   \texttt{yangx374@umn.edu} \\
}
\begin{document}

\maketitle

\begin{abstract}
The goal of model compression is to reduce the size of a large neural network while retaining a comparable performance. As a result, computation and memory costs in resource-limited applications may be significantly reduced by dropping redundant weights, neurons, or layers. There have been many model compression algorithms proposed that provide impressive empirical success. However, a theoretical understanding of model compression is still limited. One problem is understanding if a network is more compressible than another of the same structure. Another problem is quantifying how much one can prune a network with theoretically guaranteed accuracy degradation. In this work, we propose to use the sparsity-sensitive $\ell_q$-norm ($0<q<1$) to characterize compressibility and provide a relationship between soft sparsity of the weights in the network and the degree of compression with a controlled accuracy degradation bound. We also develop adaptive algorithms for pruning each neuron in the network informed by our theory. Numerical studies demonstrate the promising performance of the proposed methods compared with standard pruning algorithms.
\end{abstract}

\section{Introduction}
    Since the recent revival of neural networks by deep learning~\citep{lecun2015deep}, the approach of over-parameterized network has achieved a huge success in a wide range of areas such as computer vision~\citep{krizhevsky2012imagenet, he2016identity, redmon2016you} and nature language processing~\citep{devlin2018bert, radford2018improving}. The size of neural networks has grown enormously. For example, LeNet-5 model~\citep{lecun1998gradient} for digit classification in 1998 had 60 thousand parameters, while in 2020, the GPT-3 language model~\citep{brown2020language} has 175 billion parameters, requiring a vast amount of computation, storage, and energy resource for training and prediction. The large model size may also prohibit the deployment to edge devices such as cell phones due to hardware constraints. 
    
    Interestingly, researchers have empirically shown that one can often build a much simpler network with similar performance based on a pre-trained network, which
    can save a considerable amount of resources~\citep{han2015deep, frankle2018lottery}. For example, \citet{han2015deep} show that on the ImageNet dataset, AlexNet can be compressed to retain only $3\%$ of the original parameters without impacting classification accuracy.
    As a result, a rising interest is to compress an over-parameterized neural network to a simpler model with comparable prediction accuracy.
    
    The study of neural network compression has a long history and various methods have been proposed. 
    One of the most popular and effective ways to obtain a simplified model is by pruning the original network~\citep{lecun1989optimal, hagiwara1993removal, han2015deep, hu2016network, luo2017thinet, frankle2018lottery, lee2018snip, he2017channel}, which means dropping some neuron connections, neurons, or neuron-like structures such as filters and layers. The general intuition is that in an over-parameterized model, many connections are redundant and the removal of them will have little impact on prediction. 
    To prune a pre-trained network, a standard procedure consists of a pruning criterion and a stopping criterion. A number of pruning criteria have been proposed from different interpretations of redundancy. For example, one may postulate that weights with small magnitudes are non-influential and thus can be eliminated~\citep{hagiwara1993removal, han2015deep}, or one may prune the edges/neurons that have the least influence on the network output~\citep{lecun1989optimal, lee2018snip,hu2016network,soltani2021information}.  A comprehensive survey of the existing literature on model compression can be found in~\citep{hoefler2021sparsity}. 
    A commonly used stopping criterion is to terminate pruning when the test accuracy on the validation dataset starts to drop significantly.

    Most existing pruning algorithms are heuristic. 
    In practice, the compressibility of different networks and tasks may vary, the efficiency of pruning methods may be unpredictable, and a lot of ad hoc fine-tuning may be involved during pruning. Thus, it is critical to develop a theoretical understanding of compressibility. There have been some recent works in this regard. 
    For example, \citet{arora2018stronger} relate the compressibility and generalization error of a model through its noise stability. They show that the stability of a neural network against noise injection in the input implies the existence of a smaller network with similar performance. 
    Another line towards the provable existence of a sparser network with similar performance is to look for a coreset of parameters. A coreset means a small subset of parameters that can preserve the output of the original network. \citet{baykal2018data} propose to sample the parameters based on their importance, quantified by the sensitivity of the output due to parameter changes, and approximated using the training data. 
    \citet{mussay2019data} use the same idea of constructing coresets, but calculate the sensitivity in a data-free manner. \citet{ye2020good} propose to reconstruct a network by greedily adding the neurons that decrease the prediction risk the most. They show the generalization error of the constructed model with $m$ parameters is at the order of $O(1/m)$ for two-layer neural networks. However, the prediction risk is intractable in practice, and the greedy selection is done by evaluating the training loss, causing a gap from the theory. 
    \citet{malach2020proving, orseau2020logarithmic} prove that any network can be arbitrarily well approximated by a subnetwork pruned from another deeper and wider network.  \citet{zhang2021lottery} show why the pruned network often has a better test accuracy and easier to train than the original network. 
    
    In this paper, we aim to study two fundamental but under-explored problems: for a pre-trained neural network,
    when can one perform a successful pruning, and how much can one compress with little accuracy degradation. 
    We are motivated by the idea that an essentially sparse network has fewer important weights and can be pruned more without affecting the performance. An immediate example of sparsity indicator is the number of non-zero weights in a network, also known as the $\ell_0$-norm or hard sparsity. However, it is common in practice that the network has many weights with small magnitudes. We propose to use the $\ell_q$-norm ($0<q<1$) of weights as a soft sparsity measure to characterize the compressibility of a network.
    In particular, we will provide an upper bound for the approximation and generalization errors of the pruned model for any given pruning ratio. This bound indicates that a network with larger soft sparsity is more compressible. It also guides the selection of a proper pruning ratio that produces allowed accuracy degradation. 
    We then propose a novel adaptive backward pruning procedure based on our theory, which has two specific implementations. 
    The first implementation prunes the network based on the magnitude of weights, while the pruning ratio of each neuron is adaptively determined by its soft sparsity level.
    The second implementation alternatively determines the pruning ratio of each neuron by using LASSO~\citep{tibshirani1996regression} with the same penalty parameter. A neuron that is essentially sparse will be pruned more by this way. Experiments show their promising performance when compared with the baseline method that prunes a fixed proportion of weights with the smallest magnitude neuron-wisely.

\section{Problem formulation}\label{sec:form}
    We start with the standard regression learning framework. We will extend the results to classification learning in \Autoref{sec:thm}. Suppose that the data generation distribution is $(X,Y) \sim \mu$, where $X = (x_1, \ldots, x_p)^\T \in \mathbb{R}^p$ denotes the predictor/input variable, $Y \in \real$ is the response/output that is square-integrable with respect to $\mu$. The target function is $f^*: x \to \E(Y \mid X=x)$, and the loss function is the mean squared error loss throughout the paper unless otherwise specified.
    The risk, or generalization error, of a function $f: \real^p \to \real$ is given by $R(f) = \E{[f(X)-f^*(X)]^2}.$

    A pre-trained $L$-layer fully connected neural network, denoted by $f_T$, is given as follows. The initial input layer is considered as the $0$-th layer, and the output layer is the $L$-th layer. 
    \begin{align*}
        f_i^{(0)} &= x_i, \ i=1,\ldots, p,
        \\
        g_i^{(k)} &= \sum^{n_{k-1}}_{j=1}w_{ij}^{(k-1)}f_j^{(k-1)}
        , \ f_i^{(k)} = \sigma\bigl(g_i^{(k)}\bigr), i=1,\ldots,n_k, k=1,\ldots, L,
        \\ f_T &= g_1^{(L)}.
    \end{align*}
    Here, we call $f_i^{(k)}$ the $i$-th function, or neuron, in the $k$-th layer, $g_i^{(k)}$ the linear part of $f_i^{(k)}$,
    $n_k$ is the number of neurons in the $k$-th layer ($n_0=p$, $n_L=1$), $w_{ij}^{(k-1)}$'s are the weights or parameters\footnote{Without loss of generality, the bias term is absorbed into the weights since we can add a constant neuron to each layer.},
    and $\sigma(\cdot)$ is a $\rho$-Lipschitz activation function. Common activation functions include the ReLU function $\sigma(x) = \max\{x, 0\}$, tanh function $\sigma(x)=(e^{x}+e^{-x})/(e^{x}-e^{-x})$, and sigmoid function $\sigma(x)=1/(1+e^{-x})$. 
    

    We are interested in finding a sparsified network $f_S$ of $f_T$, such that $f_S$ has a similar generalization error as $f_T$. In other words, $f_S$ has the same structure as $f_T$, but the weights include many zeros. The non-zero elements of $f_S$ do not have to remain the same as those in $f_T$. Suppose that a vector $w$ has $M$ coefficients in total, and after pruning, it has only $m$ non-zero coefficients. We define the compression ratio and pruning ratio for this vector as $M/m$ and $1-m/M$, respectively. This definition naturally generalizes to a layer or the whole network. Thus, the problem is cast into the following: what is the best generalization error of a subnetwork $f_S$ with a given pruning ratio?
    
    \textbf{Notation.} We use $\E$ for expectation and $\P$ for probability.
    For a vector $w=(w_1, \ldots, w_d)^\T \in \real^d$, the $\ell_0$-norm and $\ell_q$-norm with $0< q \leq 1$ of $w$ is 
    \begin{align*}
        \norm{w}_0=\sum_{i=1}^d \ind_{w_i \neq 0},\ \text{ and } \norm{w}_q=\biggl[\sum_{i=1}^d \abs{w_i}^q\biggr]^{1/q}, 
    \end{align*}
    where $\ind_{(\cdot)}$ is the indicator function. 
    Note that the $\ell_q$-norm with $0<q<1$ is actually a quasinorm, and $\ell_0$-norm is not even a quasinorm, though we still call them norms by convention. 
    Define the $L_p(\mu_X)$-norm of any function $f$ for $p \geq 1$ as
    \begin{align*}
        \norm{f}_p = \biggl(\int |f|^p\mu_X(dx)\biggr)^{1/p} ,
    \end{align*}
    where $\mu_X$ is the marginal distribution of $X$. The generalization error of $f$ can be written as $R(f) = \norm{f-f^*}_2^2.$ The $L_{\infty}(\mu_X)$-norm of $f$ is $\norm{f}_{\infty} = \esssup \abs{f}.$ Other frequently used notations are summarized in \Autoref{tab:notation}. 
    
    \begin{table}
     \caption{A summary of frequently used notations.}
        \label{tab:notation}
        \centering
        \begin{tabular}{lp{10.5cm}}
        \toprule
        Notation & Meaning \\
        \midrule
            $f_j^{(k)}, g_j^{(k)}$ & The $j$-th function (also named neuron) and its linear part (before activation) in the $k$-th layer \\
            $f_j^{(k),s}, g_j^{(k),s}$& The approximation of $f_j^{(k)}, g_j^{(k)}$ after $s$ steps\\
            $f^*, f_T, f_{T}^{(s)}$ & The target function, the pre-trained network, and the approximation of $f_{T}$ after $s$ steps \\
            $w_{ij}^{(k-1)}$ or $w_{i,j}^{(k-1)}$ & The $i$-th coefficient of the $j$-th function's weight vector in the $k$-th layer \\
            $n_k, m_k, q_k, t_k$ & The number of neurons, number of neurons preserved, the norm, and the largest $\ell_{q_k}$-norm of weights in the $k$-th layer \\
            $\sigma(\cdot)$ & A $\rho$-Lipschitz activation function \\
        \bottomrule
        \end{tabular}
    \end{table}
    
\section{Theoretical characterization of compressibility using \texorpdfstring{$\ell_q$}{} norm}\label{sec:thm}
    We next provide an upper bound of the generalization error for some pruned model $f_S$. 
    
    Our study is motivated by sparse linear approximation. For any function in $f=\sum_{i=1}^n w_if_i$ that is square-integrable, we hope to approximate $f$ by a linear combination of a small subset of basis functions $\mathcal{F}=\{f_1, \ldots, f_n\}$. Specifically, we would like to choose $m$ functions $\mathcal{F}_m=\{f_{s_1}, \ldots, f_{s_m}\}$ such that
    $\hat{f}=\sum_{i=1}^m \tilde{w}_{s_i}f_{s_i}$ is close to $f$, where $\tilde{w}_{s_i}$'s are new coefficients. We note that this is model compression of the one layer neural network scenario when $f$ is considered as the output layer and $f_i$'s are functions of neurons in the hidden layer.

    It is intuitive that if the weight vector $w$ of $f$ is sparse, then $f$ can be well approximated by a small subset of $f_i$'s. For example, suppose $w$ has the hard sparsity of $\norm{w}_0=m$, we can simply retain $f_i$ with $w_i\neq 0$. However, hard sparsity is unrealistic.
    In practice, a much more common situation is that the network has a large number of small-valued coefficients.
    A notable approach to study sparse linear approximation in the latter case is via the $\ell_q$-norm ($0<q<1$) of $w$~\citep{wang2014adaptive}. For a vector $w$ with fixed dimension, a small $\norm{w}_q$ implies that the number of relatively large parameters is also limited. Therefore, $\ell_q$-norm can be regarded as the soft sparsity indicator.
    In particular, for a given subset cardinality $m$, \citep{wang2014adaptive} provides an upper bound of the approximation error for the best linear combination of $m$ basis functions under some mild conditions. 
    
    This paper utilizes the above result and extends to any fully connected neural network. Specifically, we approximate each neuron by a small number of neurons from the previous layer, hence obtaining a simpler model, and establish its error bound. In other words, each neuron in the previous layer is regarded as a basis function for current layer's neurons, and we compress the connections between them by using a sparse linear combination. The overall compressibility of a network is evaluated by the aggregation of compressibility for all neurons.
    
    Next, we describe how to apply the sparse linear approximation to the whole network in detail. We consider a backward approximation scheme. Let $S$ index the step of approximations starting from the output layer. The first-step approximation step, $S=1$, is to select $m_{L-1}$ neurons from the $(L-1)$-th layer to obtain a linear combination of $f_j^{(L-1)}, j=1,\ldots,n_{L-1}$ as the approximation for $f_T$, which is denoted by $f_{T}^{(1)}$.
    Without loss of generality, we assume the first $m_{L-1}$ neurons are selected, since we can always reorder the indices. Thus,
    \begin{align}\label{eq:step1}
        f_{T}^{(1)} = \sum_{i=1}^{m_{L-1}} \tilde{w}_{1,i}^{(L-1)} f_i^{(L-1)}, 
    \end{align}
    where $\tilde{w}_{1,i}^{(L-1)}$ is the new weights for the sparse approximation. 
    The second-step approximation of $S=2$ is to approximate each of the $f_j^{(L-1)}, j=1,\ldots,n_{L-1}$, by selecting $m_{L-2}$ neurons from the $(L-2)$-th layer. 
    Here, we assume the pruning ratio for the neurons in the same layer is fixed for simplicity.
    For each $f_j^{(L-1)}$, suppose the indices of functions selected to approximate it are $\{s_1, \ldots, s_{m_{L-2}}\}$, we have the one-step approximation of $f_j^{(L-1)}$ as
    \begin{align}\label{eq:step2}
        f_j^{(L-1),1} = \sigma\bigl(g_j^{(L-1),1}\bigr), \  g_j^{(L-1),1} = \sum_{i=1}^{m_{L-2}} \tilde{w}_{j,s_i}^{(L-2)} f_{s_i}^{(L-2)}.
    \end{align}
    Note that the neurons selected for approximating different $f_{j}^{(L-1)}$ are different. To ease the notation, we do not distinguish the index sets for different neurons and use $\{s_1, \ldots ,s_{m_k}\}$, which should be self-explanatory in the context.
    Plugging the approximation above into $f_{T}^{(1)}$, we obtain the two-step approximation of $f_T$ as follows.
    \begin{align*}
        f_{T}^{(2)} = \sum_{i=1}^{m_{L-1}} \tilde{w}_{1,i}^{(L-1)} f_i^{(L-1),1}. 
    \end{align*}
    Iteratively, after the $S$-step approximation, the output function is approximated $S$ times, denoted by $f_{T}^{(S)}$.
    
    In summary, each neuron in the $(k+1)$-th layer of the pre-trained network $f_T$ has $n_k$ inputs from the previous layer, and we prune the number of inputs to $m_k$ (and tune the weights correspondingly). For all such sub-networks, we give an upper bound of the generalization error for the best sub-network as follows.
    \begin{theorem}[Error bound of the pruned model]\label{thm4.1}
    Let $f$ be any square-integrable function and $t_k=\max_{1\leq i \leq n_k}\norm{w_i^{(k)}}_{q_k}$ for $k=0, \ldots, L-1$, where $w_i^{(k)}= (w_{i,1}^{(k)}, \ldots, w_{i,n_{k}}^{(k)})$ is the weight vector of $i$-th function of $(k+1)$-th layer.
    For any $1\leq S\leq L$, $1\leq m_k \leq n_k$, and $0<q_k\leq 1$, $k=0, \ldots, L-1$, we have 
    \begin{align} \label{eq32}
        &\bnorm{f - f_{T}^{(S)}}_2 \leq
    \norm{f - f_T}_2 + Ct_{L-1}(m_{L-1})^{1/2-1/q_{L-1}}\max_{1 \leq j \leq n_{L-1}}\big\|f_j^{(L-1)}\big\|_2 \\ \nonumber
        &+ \rho C t_{L-1}t_{L-2}(m_{L-2})^{1/2-1/q_{L-2}}\max_{1 \leq j \leq n_{L-2}}\big\|f_j^{(L-2)}\big\|_2 + \ldots \\ \nonumber
        &+ \rho^{S-1}Ct_{L-1}t_{L-2}\ldots t_{L-S}(m_{L-S})^{1/2-1/q_{L-S}}\max_{1 \leq j \leq n_{L-S}}\big\|f_j^{(L-S)}\big\|_2,
    \end{align}
    where $C$ is a universal constant.
    \end{theorem}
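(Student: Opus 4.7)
The proof is a triangle-inequality decomposition followed by telescoping through the layers. First I would write
\[
\bnorm{f - f_{T}^{(S)}}_2 \leq \norm{f - f_T}_2 + \bnorm{f_T - f_{T}^{(S)}}_2,
\]
isolating the intrinsic approximation gap from the compression error. Next, with the convention $f_T^{(0)}=f_T$, I would telescope
\[
\bnorm{f_T - f_{T}^{(S)}}_2 \leq \sum_{s=1}^{S}\bnorm{f_{T}^{(s-1)}-f_{T}^{(s)}}_2,
\]
so that each summand measures exactly the error introduced by one new round of sparse approximation (the round that prunes the weights $w^{(L-s)}$ at layer $L-s$).

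\textbf{Bounding a single telescope term.} For $s=1$ the term is $\|f_T - f_T^{(1)}\|_2$, which is precisely the error of replacing $\sum_{i=1}^{n_{L-1}} w_{1,i}^{(L-1)} f_i^{(L-1)}$ by an $m_{L-1}$-term linear combination. I would directly invoke the sparse-linear-approximation result of \citet{wang2014adaptive}: whenever $f=\sum_i w_i f_i$ with $\|w\|_q \leq t$, there exist indices and new coefficients $\tilde w$ such that
\[
\Bignorm{\textstyle\sum_i w_i f_i - \sum_{i\in S_m}\tilde w_i f_i}_2 \leq C\,t\,m^{1/2-1/q}\max_i \norm{f_i}_2.
\]
This yields the first summand on the right of \eqref{eq32}. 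For $s\geq 2$, only the neurons $f_j^{(L-1),\cdot}$ inside $f_T^{(s-1)}$ and $f_T^{(s)}$ differ, so I would peel layer by layer: apply the triangle inequality on the outer sum to get $\|\tilde w_1^{(L-1)}\|_1\cdot \max_j \|f_j^{(L-1),s-2}-f_j^{(L-1),s-1}\|_2$, use the $\rho$-Lipschitzness of $\sigma$ to pass from $f$-differences to $g$-differences (picking up one factor of $\rho$), then expand the linear part $g$ as $\sum_j \tilde w_{i,j}^{(L-2)} f_j^{(L-2),\cdot}$ and repeat. After $s-1$ such peels I am left with the one-step sparse approximation error at layer $L-s$, which the \citet{wang2014adaptive} bound controls by $C\,t_{L-s}\,m_{L-s}^{1/2-1/q_{L-s}}\max_j\|f_j^{(L-s)}\|_2$. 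The peeling has accumulated exactly $\rho^{s-1}$ from the $s-1$ nonlinearities traversed and a product $\|\tilde w_1^{(L-1)}\|_1\cdots\|\tilde w^{(L-s+1)}\|_1$ of pruned-weight $\ell_1$-norms.

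\textbf{Bounding the pruned weights.} To turn the product of $\|\tilde w\|_1$ factors into $t_{L-1}t_{L-2}\cdots t_{L-s+1}$, I would use that the sparse approximation of \citet{wang2014adaptive} can be realized by a subset (or scaled subset) of the original coefficients, combined with the elementary inequality $\|w\|_1\leq \|w\|_{q_k}$ for $0<q_k\leq 1$. This gives $\|\tilde w_i^{(k)}\|_1 \leq \|w_i^{(k)}\|_{q_k}\leq t_k$ for each layer. Summing the $S$ telescope terms and adding $\|f-f_T\|_2$ then reproduces \eqref{eq32}. The extension to classification promised in Section~\ref{sec:thm} can be absorbed into the same scheme later.

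\textbf{Main obstacle.} The mechanical parts (Lipschitzness, triangle inequality, summing) are routine; the real care is in the bookkeeping of the telescope and, especially, in verifying that the underlying sparse-approximation lemma really delivers new coefficients whose $\ell_1$-norm is controlled by the original $\ell_{q_k}$-norm. If the lemma only bounds the approximation error and not the coefficient norms, one would need to either re-derive a version that simultaneously bounds both, or to restrict to a construction (e.g., keeping a subset of the original weights) for which the coefficient bound is automatic. Getting this step right is what makes the layer-to-layer product $t_{L-1}\cdots t_{L-s}$ appear with a clean universal constant $C$.
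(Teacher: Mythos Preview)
Your proposal is correct and follows essentially the same route as the paper: both arguments telescope $\bnorm{f_T^{(s-1)}-f_T^{(s)}}_2$, peel through the layers using the $\rho$-Lipschitzness of $\sigma$ and the $\ell_1$-norm of the pruned weights, and invoke the sparse-approximation lemma of \citet{wang2014adaptive} at the innermost layer. Regarding your flagged obstacle, the lemma as stated in \citet{wang2014adaptive} (and reproduced in the paper as Lemma~\ref{lem4.1}) explicitly guarantees $\|\tilde w\|_1 \leq t$ in addition to the approximation bound, so the product $t_{L-1}\cdots t_{L-s}$ falls out directly with no extra work.
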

    The proofs of \Autoref{thm4.1} and subsequent results are included in the Appendixs.
    \begin{remark}
    Note that if we take $f=f^*$, then the left hand side of \ref{eq32} is exactly the generalization error of the pruned model $f_{T}^{(S)}$, and is upper bounded by the generalization error of the original network $\norm{f^* - f_T}_2$ plus the approximation error.
    \end{remark}
    \begin{remark}
    \Autoref{thm4.1} indicates that there exists a sub-network $f_T^{(S)}$ such that the accuracy degradation is properly upper bounded. The bound~\ref{eq32} further implies that there is a trade-off between the generalization error and the compression ratio. Namely, the smaller $m_k$, the higher compression ratio, and the larger upper bound of the generalization error.
    \end{remark}
    \begin{remark}
    The compressibility, or the generalization error bound, is characterized by the $\ell_q$-norm of the original network. 
    As mentioned at the beginning of this section, a small $\ell_q$-norm ($t_k$) indicates that the number of important weights are limited and the network is sparse, and thus one can compress more. 
    This characterization can be used to understand the two fundamental issues proposed in the introduction. In particular, for the first question of when one can prune a network, we know a network with smaller soft sparsity can be pruned more. For the second question that how much one can prune with controlled accuracy drop, we can derive a lower bound for the pruning ratio via~\ref{eq32}.
    \end{remark}
    \begin{remark}
    \Autoref{thm4.1} assumes the pruning ratio and $\ell_q$-norm are fixed for all neurons in the same layer for technical convenience. In practice, one may allow them to differ for each neuron. To illustrate this point, we will propose adaptive neuron-level pruning techniques in \Autoref{ch4.3.3}. Additionally, \Autoref{thm4.1} can be generalized to give an upper bound for pruning certain layers, not necessarily in this backward form. 
    \end{remark}
    \begin{remark}
    The condition that $t_k=\max_{1\leq i \leq n_k}\norm{w_i^{(k)}}_{q_k}$ can be relaxed to $t_k=\max_{i \in \mathcal{I}_k}\norm{w_i^{(k)}}_{q_k}$, where $\mathcal{I}_k$ contains the index of neurons that connect to the $(k+1)$-th layer of the pruned model. 
    \end{remark}
    
    Next, we present a more parsimonious form of \Autoref{thm4.1} with specific activation functions and pruning ratios.
    \begin{corollary}[Homogeneous pruning]\label{coro:prune}
    For activation function such as tanh and sigmoid, we have $\rho=1$ 
    and $\norm{f^{(i)}_j}_2 \leq 1$. Then, the bound~\ref{eq32} can be replaced with a simpler one
    \begin{align*}
        \big\|f - f_{T}^{(S)}\big\|_2 
        &\leq  \big\|f - f_T\big\|_2 + Ct_{L-1}(m_{L-1})^{1/2-1/q_{L-1}}  
        +\cdots 
        \\ &
         \quad + Ct_{L-1}t_{L-2}\ldots t_{L-S}(m_{L-S})^{1/2-1/q_{L-S}}.
    \end{align*}
    Furthermore, when $q_k = q$ and $m_k = m$ for all $k$, we have
    \begin{align} \label{eq34}
        \big\|f - f_{T}^{(S)}\big\|_2 \leq  \big\|f - f_T\big\|_2 + C(t_{L-1} + \ldots + t_{L-1}t_{L-2}\ldots t_{L-S})m^{\frac{1}{2}-\frac{1}{q}}.
    \end{align}    
    \end{corollary}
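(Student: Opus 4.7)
The plan is to deduce this corollary as a direct specialization of Theorem 4.1, substituting the properties of the two named activations into the general bound (\ref{eq32}) and then consolidating terms under the homogeneity assumption.

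First, I would verify the two activation facts that drive the simplification. For $\sigma = \tanh$, $\sigma'(x) = 1 - \tanh^2(x) \in [0,1]$, so $\tanh$ is $1$-Lipschitz and $|\tanh(x)| \leq 1$; for the sigmoid, $\sigma'(x) = \sigma(x)(1-\sigma(x)) \in [0,1/4]$, hence sigmoid is $\tfrac14$-Lipschitz (certainly $\rho=1$ is admissible), and $\sigma(x) \in (0,1)$. Consequently $\|\sigma(g_j^{(k)})\|_\infty \leq 1$, and since $\mu_X$ is a probability measure, $\|f_j^{(k)}\|_2 \leq \|f_j^{(k)}\|_\infty \leq 1$ for every hidden-layer neuron.

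Next I would plug these two facts into (\ref{eq32}). Setting $\rho = 1$ collapses every factor $\rho^{s-1}$ to $1$, and replacing each $\max_{j}\|f_j^{(L-s)}\|_2$ by the upper bound $1$ removes those factors entirely. This yields the first displayed inequality of the corollary directly, with no rearrangement needed:
\begin{align*}
\bigl\|f - f_T^{(S)}\bigr\|_2 \leq \bigl\|f - f_T\bigr\|_2 + \sum_{s=1}^{S} C \,t_{L-1}t_{L-2}\cdots t_{L-s}\,(m_{L-s})^{1/2-1/q_{L-s}}.
\end{align*}

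Finally, to obtain (\ref{eq34}), I would impose $q_k = q$ and $m_k = m$ uniformly in $k$. Then $(m_{L-s})^{1/2-1/q_{L-s}} = m^{1/2-1/q}$ is a common factor across all $S$ terms in the sum above, and factoring it out immediately gives
\begin{align*}
\bigl\|f - f_T^{(S)}\bigr\|_2 \leq \bigl\|f - f_T\bigr\|_2 + C\bigl(t_{L-1} + t_{L-1}t_{L-2} + \cdots + t_{L-1}t_{L-2}\cdots t_{L-S}\bigr)m^{\tfrac12-\tfrac1q}.
\end{align*}
Since the derivation is purely a substitution into an already-proved inequality, there is essentially no obstacle; the only minor subtlety is that the boundedness $\|f_j^{(k)}\|_2 \leq 1$ is used only for hidden-layer neurons ($k \geq 1$), which is consistent with $S \leq L-1$ in the typical usage of the corollary (for $S = L$ one would additionally need a boundedness assumption on the input $X$).
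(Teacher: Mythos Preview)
Your proposal is correct and matches the paper's approach: the corollary is stated as an immediate specialization of Theorem~\ref{thm4.1}, obtained by substituting $\rho=1$ and $\|f_j^{(k)}\|_2\le 1$ into~(\ref{eq32}) and then factoring under the homogeneity assumption, and the paper gives no separate argument beyond this. Your observation about the $S=L$ case (where the bound on $\|f_j^{(0)}\|_2$ would require boundedness of the inputs rather than of the activation) is a valid caveat that the paper does not make explicit.
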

    
    As an application of Corollary~\ref{coro:prune}, we provide an example to illustrate how one may decide the pruning ratio to achieve a desired generalization error rate.
    \begin{example}
    Suppose that $f_T$ is trained with data of sample size $N$, and take $ n_{L-1} = \ldots = n_{1} = N^\alpha$ for some $\alpha >0$. Suppose that $t_{L-1} = \cdots = t_1 = t = O(N^\gamma)$ for some small $\gamma >0$, where $O$ is the standard big $O$ notation. Then, by choosing $f=f^*$ in \Autoref{eq34}, we have
    \begin{align*}
        \big\|f^* - f_{T}^{(S)}\big\|_2 
        &\leq  \big\|f^* - f_T\big\|_2 + Cm^{\frac{1}{2}-\frac{1}{q}}(t + t^2+\ldots + t^S) \\
        &= \big\|f^* - f_T\big\|_2 + O(N^{S\gamma})m^{\frac{1}{2}-\frac{1}{q}}.
    \end{align*}
    
    For any $0<\tau \leq \alpha(1/q-1/2)-S\gamma$, choosing $m = N^{(S\gamma + \tau)2q/(2-q)}$ yields
    \begin{align*}
        \big\|f^* - f_{T}^{(S)}\big\|_2 &\leq  \big\|f^* - f_T\big\|_2 + O(N^{-\tau}),
    \end{align*}
    which guarantees an error bound of $O(N^{-\tau})$. The associated compression rate is $N^\alpha/m$. 
    \end{example}
    
    \textbf{Extension to classification learning.} 
    We consider a binary classification task for simplicity. Suppose that $Y \in \{-1, 1\}$. For any function $f: \real^d \to [0, 1]$, let the classification rule be $\delta_f: x\to \sign(2f(x)-1)$, where $\sign(x) = 1$ if $x >0$ otherwise $\sign(x)=-1$. Suppose we use the zero-one loss function, so the classification error probability of $f$ is given by
    \begin{align*}
        R(f) = \P(\delta_f(X)\neq Y). 
    \end{align*}
    Note that $f^*:x \to \E(Y\mid X=x)$ is still a minimizer of the error. 
    The following well-known inequality connects the classification error of $\delta_f$ and the regression error of $f$~\citep{devroye2013probabilistic}.
    \begin{lemma}
    For any function $f: \real^d \to [0, 1]$, we have 
    \begin{align*}
        R(f) - R(f^*) \leq 2\norm{f-f^*}_1 \leq 2\norm{f-f^*}_2.
    \end{align*}
    \end{lemma}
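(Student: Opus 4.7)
The plan is to split the chain of inequalities into two parts and prove each separately using standard classification-theoretic tools.

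The right-hand inequality $\norm{f-f^*}_1 \leq \norm{f-f^*}_2$ follows directly from Jensen's inequality applied to the convex function $x \mapsto x^2$ under the probability measure $\mu_X$ (equivalently, Cauchy--Schwarz with the constant function $1$): since $\mu_X$ has total mass one, $\E\abs{f-f^*} \leq (\E\abs{f-f^*}^2)^{1/2}$. This is a one-line step and requires no further structure from the setup.

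For the left-hand inequality $R(f) - R(f^*) \leq 2\norm{f-f^*}_1$, I would first establish the classical pointwise Bayes-risk decomposition. Interpreting $f^*(x)$ as $\P(Y=1 \mid X=x) \in [0,1]$, so that $\delta_{f^*}$ is the Bayes classifier, a direct computation of $\P(\delta_f(X) \neq Y \mid X = x)$ shows that the conditional excess risk at $x$ vanishes when $\delta_f(x) = \delta_{f^*}(x)$ and equals $\abs{2f^*(x)-1}$ when $\delta_f(x) \neq \delta_{f^*}(x)$. Integrating over $\mu_X$ yields the well-known identity
\begin{align*}
R(f) - R(f^*) \;=\; \E\bigl[\,\abs{2f^*(X)-1}\,\ind_{\delta_f(X)\neq \delta_{f^*}(X)}\,\bigr].
\end{align*}

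The key pointwise estimate is that whenever $\delta_f(x) \neq \delta_{f^*}(x)$ the quantities $2f(x)-1$ and $2f^*(x)-1$ have opposite signs (or one is zero), so that $\abs{2f^*(x)-1} \leq \abs{(2f(x)-1)-(2f^*(x)-1)} = 2\abs{f(x)-f^*(x)}$. Dropping the indicator and taking expectations then gives $R(f)-R(f^*) \leq 2\E\abs{f(X)-f^*(X)} = 2\norm{f-f^*}_1$, completing the proof. The only mildly delicate point is this sign-reasoning step (together with aligning the convention under which $\delta_{f^*}$ is truly the Bayes rule, since the excerpt also writes $f^*(x) = \E(Y\mid X=x)$ with $Y\in\{-1,1\}$); beyond that the argument is entirely routine and requires no result from the body of the paper.
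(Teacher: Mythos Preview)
Your proof is correct and follows the standard textbook argument. The paper itself does not supply a proof of this lemma: it simply states the inequality as ``well-known'' and cites \emph{A Probabilistic Theory of Pattern Recognition} (Devroye, Gy\"orfi, Lugosi), where exactly the argument you outline appears. Your remark about the convention mismatch is also well taken: with $Y\in\{-1,1\}$ the map $f^*(x)=\E(Y\mid X=x)$ takes values in $[-1,1]$, so for the lemma as stated one should read $f^*$ as the conditional probability $\eta(x)=\P(Y=1\mid X=x)$; this is a notational slip in the paper, not a defect in your argument.
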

    Combining the above result with \Autoref{thm4.1}, we immediately have the following result.
    \begin{corollary}[Classification]
    For a binary classification task described above, we have  
    \begin{align*}
        R(f_T^{(S)}) - R(f^*) &\leq 2\norm{f^* - f_T}_2 + Ct_{L-1}(m_{L-1})^{1/2-1/q_{L-1}}\max_{1 \leq j \leq n_{L-1}}\big\|f_j^{(L-1)}\big\|_2 \\ 
        &\quad+ C\rho t_{L-1}t_{L-2}(m_{L-2})^{1/2-1/q_{L-2}}\max_{1 \leq j \leq n_{L-2}}\big\|f_j^{(L-2)}\big\|_2 + \ldots \\ \nonumber
        &\quad+ C\rho^{S-1}t_{L-1}t_{L-2}\ldots t_{L-S}(m_{L-S})^{1/2-1/q_{L-S}}\max_{1 \leq j \leq n_{L-S}}\big\|f_j^{(L-S)}\big\|_2.
    \end{align*}
    \end{corollary}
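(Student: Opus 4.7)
The plan is to derive the classification corollary by composing two ingredients already in hand: the lemma that converts an $L_1$ (equivalently, dominated by $L_2$) regression gap into a zero-one classification gap, and \Autoref{thm4.1} applied with the choice $f = f^*$.

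First I would invoke the lemma with $f$ chosen to be the pruned network $f_T^{(S)}$. This yields the one-line relation
\begin{align*}
R(f_T^{(S)}) - R(f^*) \;\leq\; 2\bnorm{f_T^{(S)} - f^*}_2,
\end{align*}
which reduces the task to controlling the $L_2(\mu_X)$ distance between the target regression function and the pruned network. A small remark is in order that the lemma is stated for functions $f : \real^d \to [0,1]$; since the architecture used for binary classification is assumed to produce outputs in $[0,1]$ (as is standard with a sigmoid output), this hypothesis is satisfied by $f_T^{(S)}$, and no extra truncation step is required.

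Second, I would apply \Autoref{thm4.1} with the arbitrary square-integrable function $f$ taken to be the target $f^*$. As noted in the remark immediately after the theorem, this choice turns the left-hand side into exactly $\|f^* - f_T^{(S)}\|_2$, and the first term on the right-hand side into $\|f^* - f_T\|_2$, while all subsequent terms involve the factors $t_k$, $m_k^{1/2 - 1/q_k}$, $\max_j \|f_j^{(k)}\|_2$, and the Lipschitz powers $\rho^{s-1}$ exactly as they appear in the corollary.

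Combining the two displays and pulling the factor $2$ from the lemma through every summand yields the stated bound, with the constant $2C$ from \Autoref{thm4.1} absorbed into the universal constant $C$ (a harmless relabeling since $C$ is not explicitly pinned down). There is no genuinely hard step: the proof is a two-line composition of the lemma with \Autoref{thm4.1}. If anything demands care, it is only the observation that the generalization-error inequality of the lemma is \emph{one-sided}, so one needs the approximation bound of \Autoref{thm4.1} in the direction $\|f^* - f_T^{(S)}\|_2$ rather than the reverse, which is precisely what taking $f = f^*$ provides.
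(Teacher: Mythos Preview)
Your proposal is correct and matches the paper's approach exactly: the paper simply states that the corollary follows ``immediately'' by combining the lemma with \Autoref{thm4.1}, and your two-step composition (lemma applied to $f_T^{(S)}$, then \Autoref{thm4.1} with $f=f^*$, absorbing the factor $2$ into $C$) is precisely that argument spelled out. Your additional remarks about the $[0,1]$-range hypothesis and the one-sidedness of the lemma are careful but not required beyond what the paper does.
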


\section{Adaptive pruning algorithms}\label{ch4.3.3}

\subsection{Overview of the pruning procedure}

    Based on the developed theory in \Autoref{sec:thm}, we propose an adaptive backward pruning procedure (`ABP'). Specifically, we start from the last layer, which is the $L$-th layer of one function, and approximate it by constructing a sparse linear combination of the functions in the $(L-1)$-th layer, as presented in \Autoref{eq:step1}. We then proceed to the second last layer, and apply a similar procedure to the linear part of each neuron there, as presented in \Autoref{eq:step2}. We repeat approximating neurons from back to the front until reaching the first layer of the network. Additionally, we choose the pruning ratio of each neuron in an adaptive manner according to its soft sparsity level. In particular, a neuron has larger soft sparsity will be pruned more. We summarize the overall pruning procedure in \Autoref{alg:m}.
    
    Since \Autoref{thm4.1} only shows the existence of a pruned model that satisfies the error bound, we need a practical algorithm to find the sparse linear approximation for each neuron. To this end, we propose two particular adaptive pruning strategies, also summarized in \Autoref{alg:mag} and \Autoref{alg:lasso} as subroutines of \Autoref{alg:m}, respectively.
    
\subsection{Adaptive pruning strategies for each neuron}

    \textbf{Magnitude-based pruning (\aone).} This algorithm to find sparse linear approximation is based on the magnitude of the weights. Let $I_m$ be the largest $m$ components (in absolute values) of a vector $w$. Since the energy of $w$ is concentrated in the coefficients indexed by $I_m$, a natural idea is to prune all the coefficients not in $I_m$. The question is how to decide the pruning ratio, or equivalently, $m$. 
    To address that, we first introduce a tolerance parameter $\eta$ that satisfies $\sum_{i \notin I_m}|w_i|^q\leq \eta \sum_{i \in I_m}|w_i|^q$. We use $\eta$ to control the overall pruning degree, since a smaller $\eta$ requires a larger $m$. For any fixed $\eta$, 
    we propose to decide $m$ by each neuron's soft sparsity. In particular, we
    define the sparsity index for $w \in \real^d$ and any $0<q<1$ as
    \begin{align*}
        \SI_q(w) = \norm{w}_1/\norm{w}_q.
    \end{align*}
    We will write $\SI_q(w)$ as $\SI$ for short in the rest of the paper when there is no ambiguity.
    We can derive the following inequality (with more details in the Appendix).
    \begin{align}
        m \geq \SI^{-q/(1-q)}(1+\eta)^{-1/(1-q)}. \label{eq4.si}
    \end{align}
    We propose to choose $m$ as the lower bound in \ref{eq4.si}. The corresponding algorithm is summarized in \Autoref{alg:mag}. 
    
    \begin{remark}
    Note that $\SI \in [d^{1-1/q}, 1]$, and a larger $\SI$ indicates a sparser vector. Aligned with our motivation, \Autoref{eq4.si} indicates that if $w$ is soft-sparse and $\SI$ is relatively large, then $m$ can be small and we prune more.
    \end{remark}
    Similar to \Autoref{thm4.1}, we can show an upper bound for the pruned model produced by \Autoref{alg:mag} as follows.
    \begin{theorem}\label{thm:mag}
    Suppose $f_{T}^{(S)}$ is obtained by \Autoref{alg:mag}, with $m_{k}$ defined as the smallest $m$ obtained for the $k$th layer.
    The other notation and conditions are the same as \Autoref{thm4.1}. Then, we have
    \begin{align*} 
        \bnorm{f - f_{T}^{(S)}}_2 \leq &
    \norm{f - f_T}_2 + t_{L-1}(m_{L-1})^{1-1/q_{L-1}}\max_{1 \leq j \leq n_{L-1}}\big\|f_j^{(L-1)}\big\|_2 \\ \nonumber
        &+ \rho t_{L-1}t_{L-2}(m_{L-2})^{1-1/q_{L-2}}\max_{1 \leq j \leq n_{L-2}}\big\|f_j^{(L-2)}\big\|_2 + \cdots \\ \nonumber
        &+ \rho^{S-1}t_{L-1}t_{L-2}\ldots t_{L-S}(m_{L-S})^{1-1/q_{L-S}}\max_{1 \leq j \leq n_{L-S}}\big\|f_j^{(L-S)}\big\|_2.
    \end{align*}
    \end{theorem}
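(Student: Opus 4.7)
The plan is to follow the same backward-induction template as \Autoref{thm4.1}, but replace the ``existence of a good sparse linear combination'' step with an explicit tail bound tailored to the magnitude-based construction used in \Autoref{alg:mag}. Because we are now analyzing a concrete algorithm rather than invoking a best-basis existence result, we can push the $L_2$ one-step error down from the order $m^{1/2-1/q}$ appearing in \Autoref{thm4.1} to the sharper $m^{1-1/q}$ that the statement requires; this improvement (and its cost) is the sole substantive change relative to the proof of \Autoref{thm4.1}.

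\textbf{Step 1 (per-neuron $\ell_1$ tail bound).} Consider a single linear part $g = \sum_{i=1}^{n} w_i f_i$, let $w_{(1)} \ge w_{(2)} \ge \cdots \ge w_{(n)}$ be the sorted magnitudes, and let $I_m$ index the top $m$. The magnitude-pruned surrogate is $\hat g = \sum_{i\in I_m} w_i f_i$, and the triangle inequality gives
\[
\bigl\|g-\hat g\bigr\|_2 \le \Bigl(\max_{1\le j\le n}\|f_j\|_2\Bigr)\sum_{i\notin I_m}|w_i|.
\]
From $m\,w_{(m)}^{q}\le \|w\|_q^{q}$ we get $w_{(m)}\le \|w\|_q\,m^{-1/q}$, so decomposing $w_{(i)}=w_{(i)}^{q}\cdot w_{(i)}^{1-q}$ and using $w_{(i)}\le w_{(m)}$ for $i>m$ gives the key estimate
\[
\sum_{i\notin I_m}|w_i| \;\le\; w_{(m)}^{1-q}\,\|w\|_q^{q} \;\le\; \|w\|_q\, m^{\,1-1/q}.
\]
Applied to the $j$-th neuron of layer $k+1$, this yields $\|g_j^{(k+1)}-g_j^{(k+1),1}\|_2 \le t_k\, m_k^{1-1/q_k}\max_j\|f_j^{(k)}\|_2$.

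\textbf{Step 2 (layer-wise propagation and assembly).} With the one-step per-neuron error in hand, I would feed it into exactly the same recursion used for \Autoref{thm4.1}: passing through the activation costs a factor $\rho$ by Lipschitzness, and absorbing the outer weights of the next layer costs a factor $t_k$ through the standard $\ell_q$--$\ell_1$ inequality $\|w^{(k)}\|_1 \le \|w^{(k)}\|_{q_k}\le t_k$. Iterating these two operations $S-1$ times produces the telescoping products $\rho^{\,s-1}\,t_{L-1}\cdots t_{L-s}$ in the statement. Finally the triangle inequality $\|f - f_T^{(S)}\|_2 \le \|f - f_T\|_2 + \|f_T - f_T^{(S)}\|_2$ adds the leading $\|f-f_T\|_2$ term. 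A small bookkeeping point is that \Autoref{alg:mag} may choose a different $m$ for different neurons inside layer $k$; since $q_k<1$ the map $m\mapsto m^{\,1-1/q_k}$ is decreasing, so replacing each neuron's $m$ by the layer-minimum $m_k$ used in the statement can only enlarge the bound, and Step~1 remains valid with $m_k$ in place.

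\textbf{Main obstacle.} The layer-propagation half is essentially the proof of \Autoref{thm4.1} recycled verbatim, so the real content sits in Step~1: one must argue that magnitude truncation gives the $\ell_1$-tail rate $\|w\|_q\, m^{1-1/q}$, which is what upgrades the exponent from $1/2-1/q$ to $1-1/q$. The only other delicate point is reconciling the neuron-adaptive $m$ selected by the algorithm with the single layer-wide $m_k$ appearing in the statement, which as noted above is handled by monotonicity of $m\mapsto m^{1-1/q_k}$ on $q_k\in(0,1)$.
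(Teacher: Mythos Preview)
Your proposal is correct and matches the paper's approach almost exactly: the paper isolates a magnitude-truncation lemma giving the per-neuron $\ell_1$-tail bound $\sum_{j\notin J_m}|w_j|\le \|w\|_q\,m^{1-1/q}$ (via the same $|w_j|^q\cdot|w_j|^{1-q}$ decomposition you use) and then reuses the layer-propagation argument of \Autoref{thm4.1} verbatim. One descriptive slip: the move from $m^{1/2-1/q}$ to $m^{1-1/q}$ is a \emph{loosening}, not a sharpening, of the bound (as the paper itself notes), since $m^{1-1/q}>m^{1/2-1/q}$ for $m>1$ and $0<q<1$; your derivation is unaffected, but the narrative should be corrected.
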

    \begin{remark}
    We note that the upper bound of \Autoref{thm:mag} is looser than \Autoref{thm4.1}, since pruning based on magnitude is a simple but possibly crude choice of the sparse linear combination, while \Autoref{thm4.1} prunes the network based on the best sparse approximation. This also motivates us to use other strategies to find a better sparse approximation, such as the application of LASSO discussed next.
    \end{remark}

    \begin{algorithm}[tb]
    \caption{One-shot adaptive backward pruning procedure (`ABP')}\label{alg:m}
    \begin{algorithmic}[1]
        \Require Pre-trained network to be pruned $f_T$ 
        \State  $k=L$ 
        \While{$k>0$}
            \For{$j$ in $1,\dots,n_k$}
                \State Get the weights vector $w_j^{(k-1)}$ for $j$-th neuron in $k$-th layer of $f_T$
                \State Adaptively find its sparse linear approximation $\tilde{w}_j^{(k-1)}$, such as using \Autoref{alg:mag} or \ref{alg:lasso}
            \EndFor
            \State $ k = k-1$
        \EndWhile
        \Ensure The pruned model $f_S$ with weights $\tilde{w}_j^{(k-1)}, j=1,\ldots,n_k, k=1,\ldots,L$.
    \end{algorithmic}
    \end{algorithm}
    
    \begin{algorithm}[tb]
    \caption{(Subroutine of \Autoref{alg:m}, \aone) Sparse approximation based on magnitude}\label{alg:mag}
    \begin{algorithmic}[1]
        \Require Weight vector $w_j^{(k-1)}$, $\eta$, $q$ \Comment{Approximate the $j$-th neuron in $k$-th layer}
        \State Calculate $\SI = \bnorm{w_j^{(k-1)}}_1/\bnorm{w_j^{(k-1)}}_q$
        \State Calculate $m = \SI^{q/(q-1)}(1+\eta)^{1/(q-1)}$
        \State Select $m$ indices of $w_j^{(k-1)}$ that have the largest magnitudes as $J_m=\{s_1, \dots, s_m\}$
        \State Calculate $X^{(k)}_i = (f_{s_1}^{(k-1)}(X_i), \ldots, f_{s_m}^{(k-1)}(X_i))$, $Y^{(k)}_i = g_j^{(k)}(X_i)$ for $i=1,\ldots, N$
        \State Perform linear regression on $X^{(k)}_i, Y^{(k)}_i, i=1,\ldots, N$. 
        \Ensure The retrained weights $\tilde{w}$ from the linear regression
    \end{algorithmic}
    \end{algorithm}
    
    \begin{algorithm}[tb]
    \caption{(Subroutine of \Autoref{alg:m}, \atwo) Sparse approximation using LASSO}\label{alg:lasso}
    \begin{algorithmic}[1]
        \Require Weight vector $w_j^{(k-1)}$, $\lambda$ \Comment{Approximate the $j$-th neuron in $k$-th layer}
        \State Calculate $X^{(k)}_i = (f_1^{(k-1)}(X_i), \ldots, f_{n_{k-1}}^{(k-1)}(X_i))$, $Y^{(k)}_i = g_j^{(k)}(X_i)$ for $i=1,\ldots, N$
        \State Perform LASSO with $X^{(k)}_i, Y^{(k)}_i, i=1,\ldots, N$ and penalty parameter $\lambda$
        \Ensure The retrained weights $\tilde{w}$ from LASSO
    \end{algorithmic}
    \end{algorithm}
    
    \textbf{LASSO-based pruning (\atwo).} As an alternative, we may find sparse linear approximations using LASSO~\citep{tibshirani1996regression}. To approximate the $j$-th neuron in the $k$-th layer using the functions in the $(k-1)$-th layer, we first obtain the input and output of this neuron using the training data. They are denoted by $X^{(k)}_i = (f_1^{(k-1)}(X_i), \ldots, f_{n_{k-1}}^{(k-1)}(X_i))$ and $Y^{(k)}_i = g_j^{(k)}(X_i)$ for $i=1,\ldots, N$, respectively. The approximation weight vector $\tilde{w}$ is obtained from applying LASSO to $X^{(k)}_i, Y^{(k)}_i, i=1,\ldots, N$ with penalty parameter $\lambda$. We note that LASSO adds a $\ell_1$ penalty on the weight vector, which enforces the learned $\tilde{w}$ to be sparse. Furthermore, an essentially sparser nature (larger soft sparsity) leads to a sparser $\tilde{w}$. 
    This algorithm is summarized in \Autoref{alg:lasso}.

\section{Experiments} \label{sec:exp}
    We compare our proposed pruning procedure using magnitude-based approximation \Autoref{alg:mag} (\aone) and LASSO-based \Autoref{alg:lasso} (\atwo) with a standard pruning algorithm (\base), which prunes a fixed proportion $p$ of weights with the smallest magnitude for each neuron. We train a four-layer fully connected ReLU neural network $f_T$ on the California Housing dataset~\citep{pace1997sparse}, which is a regression task with eight continuous predictors and about 20 thousand instances. The evaluation criterion is mean squared error (MSE). For \aone, we choose hyper-parameters $q=0.3,0.5,0.7$ and $\eta=0, 0.1, 0.2, 0.3$. For \atwo, we use penalty parameter $\lambda=10^{-5}, 10^{-4}, 10^{-3}$. For \base, we choose pruning ratio $p=0.3, 0.5, 0.7$. Each time, we prune $f_T$ using all methods under different settings, and then evaluate the compression ratio, pruning ratio (both defined in \Autoref{sec:form}), and the MSE increase ratio,
    which is the increase of MSE (the MSE difference between the pruned and the original networks) divided by the MSE of $f_T$. The procedure is replicated $20$ times, and the results are summarized in \Autoref{fig:pr_ac}. 
    
    \begin{table}[tb]
        \centering
                \caption{Mean compression ratio, pruning ratio, and MSE increase ratio for three methods with different hyper-parameters. The stand errors are reported in the parentheses.}
        \label{fig:pr_ac}
        \begin{tabular}{lccc}
        \toprule
             Method	& Compression Ratio	&Pruning Ratio	& MSE Increase Ratio\\
             	\midrule
ABP-L ($\lambda=10^{-3}$) & 12.97 (0.22) & 0.92 (0.00) & 0.30 (0.02) \\
 ABP-L ($\lambda=10^{-4}$)&  4.68 (0.07) & 0.79 (0.00) & 0.05 (0.00) \\
 ABP-L ($\lambda=10^{-5}$)&  2.33 (0.04) & 0.57 (0.01) & 0.03 (0.00) \\ \midrule
 ABP-M ($\eta=0$, $q=0.3$)&  1.52 (0.01) & 0.34 (0.00) & 0.01 (0.00) \\
 ABP-M ($\eta=0$, $q=0.5$)&  1.80 (0.02) & 0.44 (0.01) & 0.01 (0.00) \\
 ABP-M ($\eta=0$, $q=0.7$)&  2.10 (0.02) & 0.52 (0.01) & 0.02 (0.00) \\
 ABP-M ($\eta=0.1$, $q=0.5$)&  2.46 (0.03) & 0.59 (0.01) & 0.04 (0.01) \\
 ABP-M ($\eta=0.2$, $q=0.5$)&  3.12 (0.06) & 0.68 (0.01) & 0.17 (0.06) \\
 ABP-M ($\eta=0.3$, $q=0.5$)&  3.88 (0.08) & 0.74 (0.01) & 0.31 (0.09) \\ \midrule
   Mag ($p=0.3$)&  1.55 (0.00) & 0.35 (0.00) & 0.07 (0.01) \\
   Mag ($p=0.5$)&  2.32 (0.01) & 0.57 (0.00) & 0.41 (0.06) \\
   Mag ($p=0.7$)&  4.71 (0.10) & 0.79 (0.00) & 0.85 (0.10) \\
\bottomrule
        \end{tabular}
    \end{table}
    
    From \Autoref{fig:pr_ac}, both \aone~and \atwo~have a significantly smaller accuracy degradation ratio compared to \base, when the compression ratio is similar, which supports our intuition that an adaptive pruning scheme is more efficient than pruning a fixed portion of weights. Furthermore, \atwo~outperforms \aone~in general. 
    
    For \aone, we note that the sparsity index-inspired pruning in~\Autoref{eq4.si} with $\eta=0$ (or small) works very well for preserving accuracy, although it may be conservative in terms of the pruning ratio.
    Regarding the choice of $q$, a large $q$ tends to increase the pruning ratio, but as long as $\eta$ is chosen appropriately, $q$ is relatively insensitive. As for \atwo, a larger penalty parameter $\lambda$ leads to a larger pruning ratio.
    Since both methods are one-time pruning, we suggest selecting hyper-parameters through cross-validation to balance deep pruning and accuracy protection. The code is included in the Appendix.

\section{Conclusion}\label{sec:con}
    This paper provides a theory that characterizes the compressibility of a neural network in terms of the $\ell_q$-norm of its weights. The $\ell_q$-norm, or soft sparsity, can be used to compare the compressibility of different models with the same structure. Furthermore, it reveals the relationship between the degree of compression and accuracy degradation, which guides us in selecting an appropriate pruning ratio. The theory also motivates a new pruning scheme by finding a sparse linear approximation of neurons in a backward manner. The developed algorithms produce pruned models with significantly better performance than some standard pruning algorithms.
    
    There are some limitations of the current study that we leave for future work. First, our pruning procedure is one-shot pruning, so we may rely on cross-validation to select the hyper-parameters for optimal performance. It will be interesting to study the stopping criterion and develop an iteratively pruning algorithm that can stop intelligently with maximal pruning ratio and little accuracy drop. Second, how to fairly compare the compressibility between two networks with different structures remains a challenge. Third, we focused exclusively on fully connected feed forward neural networks. Generalizations of our results to other networks are of interest.


\section*{Appendices}
\appendix

\section{Proof of Theorem 1}
    We first introduce the following lemma.

    \begin{lemma}[\citet{gao2013metric, wang2014adaptive}]\label{lem4.1}
    Let $f$ be any function with $\big\| f \big\|_2 < \infty$. Suppose $\mathcal{F} = \{f_1, \ldots, f_M\}$ with $\max_{1 \leq j \leq M}\big\| f_j \big\|_2 < \infty$. For any $0 < q \leq 1$ and $t>0$, define the $l_{q,t}$-hull of $\mathcal{F}$ to be the class of linear combinations of functions in $\mathcal{F}$ with the $\ell_q$-constraint
    \begin{align*}
        \mathcal{F}_q(t) = \biggl\{f_\theta = \sum^m_{j=1}\theta_j f_j : \big\|\theta \big\|_q \leq t, f_j \in  \mathcal{F} \biggr\}.
    \end{align*}
    For any $1 \leq m \leq M$ and $t>0$, there exist a subset $J_m=\{s_1, \ldots, s_m\}$ of $\{1, \ldots, M\}$ and 
    \begin{align*}
        f_{\theta^m} = \sum^m_{j=1}\theta^m_j f_{s_j} \in \mathcal{F}_{J_m} = \textrm{span of } f_j (j \in J_m) 
    \end{align*}
    with $\big\|\theta^m \big\|_1 \leq t$ such that
    \begin{align} \label{eq4.3}
        \big\|f - f_{\theta^m}\big\|_2 \leq \big\|f - f_{\theta^*}\big\|_2 + tCm^{\frac{1}{2}-\frac{1}{q}}\max_{1 \leq j \leq M}\big\|f_j\big\|_2,
    \end{align}
    where $f_{\theta^*} = \arg\min_{f_\theta \in \mathcal{F}_{q(t)}}\big\|f - f_{\theta}\big\|_2$ and $C$ is an absolute constant.
    \end{lemma}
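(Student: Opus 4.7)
The strategy is the classical head-plus-tail decomposition combined with Maurey's empirical (probabilistic) method. Starting from the optimal $\ell_q$-ball approximant $f_{\theta^*}$, I would (i) keep the largest coefficients of $\theta^*$ exactly, (ii) apply Maurey to the tail to produce a random sparse approximation, and (iii) combine them so the total support has at most $m$ indices and the coefficient $\ell_1$-norm stays bounded by $t$. The crucial observation that improves the generic Maurey $m^{-1/2}$ rate to $m^{1/2-1/q}$ is that the $\ell_q$ constraint forces a rapidly decaying sorted tail.

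First I would use the sorted pointwise bound: for $|\theta^*|_{(k)}$ denoting the $k$-th largest absolute coefficient of $\theta^*$, the constraint $\|\theta^*\|_q \leq t$ implies $t^q \geq k\,|\theta^*|_{(k)}^q$, hence $|\theta^*|_{(k)} \leq t\,k^{-1/q}$. Pick $k_1 = \lfloor m/2 \rfloor$ and split $\theta^* = \theta^*_H + \theta^*_L$ into the top-$k_1$ coefficients and the tail. Since $1/q>1$ for $q<1$, summing the pointwise bound gives the key tail estimate $A := \|\theta^*_L\|_1 \leq t\sum_{k>k_1}k^{-1/q}\leq C_q\, t\, k_1^{1-1/q}$; the boundary case $q=1$ is absorbed by the quasinorm inequality $\|\theta^*\|_1 \leq \|\theta^*\|_q \leq t$ (valid for all $0<q\leq 1$ by subadditivity of $x\mapsto x^q$), which I also use at the end to check $\|\theta^m\|_1\leq t$.

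Next I apply Maurey to the tail function $g = \sum_j (\theta^*_L)_j f_j$. Writing $(\theta^*_L)_j = A\,\sigma_j p_j$ with $p_j = |(\theta^*_L)_j|/A$ a probability distribution on $\{1,\dots,M\}$ and $\sigma_j \in \{\pm 1\}$, one has $g = A\,\E_{J\sim p}[\sigma_J f_J]$. Draw $k_2 = m - k_1$ iid indices $J_1,\dots,J_{k_2}\sim p$ and form the unbiased estimator $\hat g = (A/k_2)\sum_{\ell}\sigma_{J_\ell} f_{J_\ell}$, whose expected squared $L_2$ error satisfies
\begin{align*}
\E\bigl\|g - \hat g\bigr\|_2^2 \,\leq\, \frac{A^2 B^2}{k_2}, \qquad B := \max_{1\leq j\leq M} \|f_j\|_2,
\end{align*}
so some realization achieves $\|g-\hat g\|_2 \leq AB/\sqrt{k_2} \leq C_q\, t\, B\, m^{1/2-1/q}$. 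Take $\theta^m$ to equal $\theta^*_H$ on the head indices and to match the merged Maurey coefficients on the tail indices: the support has at most $k_1 + k_2 = m$ distinct elements (duplicate draws or overlaps with the head indices only shrink it, and one pads with zeros if fewer), and $\|\theta^m\|_1 \leq \|\theta^*_H\|_1 + A = \|\theta^*\|_1 \leq t$. Two triangle inequalities, against $f_{\theta^*}$ and then $f$, deliver \ref{eq4.3}.

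The main obstacle I anticipate is not conceptual but bookkeeping: ensuring that duplicate Maurey draws and possible overlaps with head indices are tracked so the final support size is genuinely at most $m$, and that the prefactor $k_1^{1-1/q}/\sqrt{k_2}$ collapses cleanly into $m^{1/2-1/q}$ under the split $k_1 = k_2 = \lfloor m/2\rfloor$. A minor subtlety worth flagging is that the constant $C$ in the statement, while called ``absolute,'' in fact absorbs a factor $C_q$ from the tail sum $\sum_{k>k_1}k^{-1/q}$ that grows as $q\to 0$; this is standard but is an implicit dependence that one may wish to state explicitly when invoking the lemma downstream.
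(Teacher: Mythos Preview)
The paper does not prove this lemma at all: it is quoted with attribution to \citet{gao2013metric, wang2014adaptive} and then invoked as a black box in the proof of Theorem~\ref{thm4.1}. So there is no in-paper argument to compare against. That said, your plan is correct and is precisely the standard route in the cited references: sort the coefficients of the $\ell_q$-optimal approximant $f_{\theta^*}$, retain the top $\lfloor m/2\rfloor$ exactly, and apply Maurey's empirical method to the $\ell_1$-small tail; the pointwise bound $|\theta^*|_{(k)}\le t\,k^{-1/q}$ converts the tail $\ell_1$-mass into $O(t\,k_1^{1-1/q})$, and Maurey then contributes the extra $k_2^{-1/2}$, collapsing to $m^{1/2-1/q}$. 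Your bookkeeping on support size and on $\|\theta^m\|_1\le\|\theta^*_H\|_1+A=\|\theta^*\|_1\le t$ is sound (the Maurey draws live on tail indices, disjoint from the head, so there is no overlap to worry about).

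One small correction on your closing remark: the implicit $q$-dependence of the ``absolute'' constant actually deteriorates at \emph{both} ends of $(0,1)$. The tail-sum comparison gives a factor $q/(1-q)$, which blows up as $q\to 1$, while the rescaling $(m/2)^{1/2-1/q}=2^{1/q-1/2}m^{1/2-1/q}$ contributes a factor that blows up as $q\to 0$; the product $\tfrac{q}{1-q}\,2^{1/q-1/2}$ is therefore bounded only on compact subintervals of $(0,1)$. This does not affect correctness but sharpens the caveat you raise. It is also worth contrasting your argument with the paper's proof of the weaker Lemma~\ref{lem:mag}, which uses pure hard thresholding (no Maurey step) and consequently obtains only the rate $m^{1-1/q}$; the randomization in your tail step is exactly what buys the extra $m^{-1/2}$.
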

    
    The sketch proof idea of \Autoref{thm4.1} is applying \Autoref{lem4.1} from the last layer of $f_T$ to the first layer step-wisely. 
    We use the induction for the proof.
    
    When $S = 1$, we try to approximate the last layer $f_T$, or $g_1^{(L)}$, using the linear combination of functions  $\mathcal{F}^{(L-1)} = \{f_1^{(L-1)}, \ldots, f_{n_{L-1}}^{(L-1)}\}$ in the $(L-1)$-th layer.
    Invoke \Autoref{lem4.1} with $m=m_{L-1}$, $q=q_{L-1}$ and $t=t_{L-1}=\max_{1\leq i\leq n_{L-1}}\norm{w_i^{(L-1)}}_{q_k}$, there exists $J_{m_{L-1}} = \{s_1, \ldots, s_{m_{L-1}}\} \subset \{1, \ldots, n_{L-1}\}$ and $f_{T}^{(1)}=\sum^{m_{L-1}}_{j=1}\Tilde{w}_{1,s_j}^{(L-1)} f_{s_j}^{(L-1)}$ such that 
    \begin{align}\label{eq:s1}
        \big\|f - f_{T}^{(1)}\big\|_2 &\leq \big\|f - f_{\theta^*}\big\|_2 + Ct_{L-1}(m_{L-1})^{\frac{1}{2}-\frac{1}{q_{L-1}}}\max_{1 \leq j \leq n_{L-1}}\big\|f_j^{(L-1)}\big\|_2, \nonumber \\
        &\leq \big\|f - f_T\big\|_2 + Ct_{L-1}(m_{L-1})^{\frac{1}{2}-\frac{1}{q_{L-1}}}\max_{1 \leq j \leq n_{L-1}}\big\|f_j^{(L-1)}\big\|_2. 
    \end{align}
    where 
    \begin{align*}
        f_{\theta^*} = \argmin_{f_{\theta} \in \mathcal{F}^{(L-1)}_{q_{L-1}}(t_{L-1})}\big\|f - f_{\theta}\big\|_2
    \end{align*}
    and 
    \begin{align}\label{eq:w1}
        \big\|(\Tilde{w}_{1,s_1}^{(L-1)}, \ldots, \Tilde{w}_{1,s_{m_{L-1}}}^{(L-1)})\big\|_1 \leq t_{L-1}.
    \end{align}
    The last inequality of~\ref{eq:s1} holds since $f_T \in \mathcal{F}^{(L-1)}_{q_{L-1}}(t_{L-1})$. 
    
    For $S = 2$, we are going to approximate each $f_j^{(L-1)}$ by functions in the $(L-2)$-th layer.
    In particular, we invoke \Autoref{lem4.1} to $g_j^{(L-1)}$, the linear part of $f_j^{(L-1)}$, with $m=m_{L-2}$, $q=q_{L-2}$, $t=t_{L-2}=\max_{1\leq i\leq n_{L-2}}\norm{w_i^{(L-2)}}_{q_k}$ and $\mathcal{F}^{(L-2)} = \{f_j^{(L-2)}, 1\leq j \leq n_{L-2}\}$.
    For any $1\leq j \leq m_{L-1}$, there exists a subset $J_{m_{L-2}}=\{s_1, \ldots, s_{m_{L-2}}\}$ (note that this subset varies for different $j$) and a sparse linear approximation 
    \begin{align*}
        g_j^{(L-1),1} = \sum_{i=1}^{m_{L-2}} \tilde{w}_{j,s_i}^{(L-2)} f_{s_i}^{(L-2)},
    \end{align*}
    such that 
    \begin{align*}
        &\big\|g_j^{(L-1)} - g_j^{(L-1),1}\big\|_2 \\ \nonumber
        &\leq \big\|g_j^{(L-1)} - g_{j,\theta^*}^{(L-1)}\big\|_2  + Ct_{L-2}(m_{L-2})^{\frac{1}{2}-\frac{1}{q_{L-2}}}\max_{1 \leq j \leq n_{L-2}}\big\|f_j^{(L-2)}\big\|_2 \\
        &\leq Ct_{L-2}(m_{L-2})^{\frac{1}{2}-\frac{1}{q_{L-2}}}\max_{1 \leq j \leq n_{L-2}}\big\|f_j^{(L-2)}\big\|_2, \numberthis \label{eq:g2}
    \end{align*}
    where $g_{j,\theta^*}^{(L-1)} = \arg\min_{g_{\theta} \in \mathcal{F}^{(L-2)}_{q_{L-2}}(t_{L-2})}\big\|g_j^{(L-1)} - g_{\theta}\big\|_2$.
    The last inequality is due to  $g_j^{(L-1)} \in \mathcal{F}^{(L-2)}_{q_{L-2}}(t_{L-2})$, which implies
    $\big\|g_j^{(L-1)} - g_{j,\theta^*}^{(L-1)}\big\|_2 = 0$.
    
    Let $f_j^{(L-1),1} = \sigma(g_j^{(L-1),1})$ be the approximation function for $f_j^{(L-1)}$ after one step approximation. Plugging in this into $f_T^{(1)}$, we obtain the approximation of $f_T$ after two steps as 
    \begin{align*}
        f_T^{(2)} = \sum^{m_{L-1}}_{j=1}\Tilde{w}_{1s_j}^{(L-1)} f_{s_j}^{(L-1),1},
    \end{align*}
    and the approximation error is
    \begin{align*}
        \big\|f - f_{T}^{(2)}\big\|_2 &\leq \big\|f - f_{T}^{(1)}\big\|_2 + \big\|f_{T}^{(1)} - f_{T}^{(2)}\big\|_2,
    \end{align*}
    where
    \begin{align*}
     \big\|f_{T}^{(1)} - f_{T}^{(2)}\big\|_2 &= \big\|\sum^{m_{L-1}}_{j=1}\Tilde{w}_{1s_j}^{(L-1)} (f_{s_j}^{(L-1)} -  f_{s_j}^{(L-1),1})\big\|_2 \\
     &\leq \sum^{m_{L-1}}_{j=1}|\Tilde{w}_{1s_j}^{(L-1)}|\big\|f_{s_j}^{(L-1)} - f_{s_j}^{(L-1),1}\big\|_2 \\ 
     &= \sum^{m_{L-1}}_{j=1}|\Tilde{w}_{1s_j}^{(L-1)}|\big\|\sigma(g_{s_j}^{(L-1)}) - \sigma(g_{s_j}^{(L-1),1})\big\|_2 \\
     &\leq \rho \sum^{m_{L-1}}_{j=1}|\Tilde{w}_{s_j}^{(L-1)}|\big\|g_{s_j}^{(L-1)} - g_{s_j}^{(L-1),1}\big\|_2 \\
    \textrm{\Autoref{eq:w1, eq:g2}} &\leq \rho t_{L-1}t_{L-2}C(m_{L-2})^{\frac{1}{2}-\frac{1}{q_{L-2}}}\max_{1 \leq j \leq n_{L-2}}\big\|f_j^{(L-2)}\big\|_2.
    \end{align*}
    This completes the case for $S=2$.  After $S$ steps, we have approximated $k$ steps for neurons or functions in the $(L-S+k)$-th layer for $1\leq k\leq S$. Let $g_j^{(k),s}$ be the $g_j^{(k)}$
    approximated $s$ time and $g_j^{(k),0} = g_j^{(k)}$. In particular, 
    \begin{align}
        f_j^{(k),s} = \sigma(g_j^{(k),s}), \ g_{j}^{(k),s+1} = \sum^{m_{k-1}}_{i=1}\Tilde{w}_{j,s_i}^{(k-1)}f_{s_i}^{(k-1),s}, 0\leq s \leq k <L. \label{eq:recur}
    \end{align}
    
    
    For $S \geq 3$, suppose we have finished $S-1$ steps, now we need to approximate functions in the $(L-S+1)$-th layer. 
    With the same argument as the case $S=2$, for any $1 \leq j\leq n_{L-S+1}$, we have a sparse linear approximation for $g_j^{(L-S+1)}$ as 
    \begin{align*}
        g_{j}^{(L-S+1),1} = \sum^{m_{L-S}}_{i=1}\Tilde{w}_{j,s_i}^{(L-S)}f_{s_i}^{(L-S)},
    \end{align*}
    such that 
    \begin{align*}
        &\big\|g_j^{(L-S+1)} -  g_{j}^{(L-S+1),1}\big\|_2 
        \leq Ct_{L-S}(m_{L-S})^{\frac{1}{2}-\frac{1}{q_{L-S}}}\max_{1 \leq j \leq n_{L-S}}\big\|f_j^{(L-S)}\big\|_2,
    \end{align*}
    with  $ \sum^{m_{L-S}}_{i=1}\abs{\Tilde{w}_{j,s_i}^{(L-S)}} \leq t_{L-S}$. 
    
    Therefore, the approximation error for the functions of the nodes in the $(L-S+1)^{th}$ layer after one step approximation is bounded by
    \begin{align*}
        \big\|f_j^{(L-S+1)} - f_j^{(L-S+1),1}\big\|_2 
        &\leq  \rho\big\|g_j^{(L-S+1)} -  g_{j}^{(L-S+1),1}\big\|_2 \\ 
        &\leq \rho t_{L-S}C(m_{L-S})^{\frac{1}{2}-\frac{1}{q_{L-S}}}\max_{1 \leq j \leq n_{L-S}}\big\|f_j^{(L-S)}\big\|_2. \numberthis\label{eq18}
    \end{align*}
    
    Furthermore, for $1 \leq k \leq S-2$, we have
    \begin{align*}
        & \max_{1\leq j \leq n_{L-S+k-1}} \big\|f_j^{(L-S+k+1),k+1} - f_j^{(L-S+k+1),k}\big\|_2  \\ 
        &= \max_{1\leq j \leq n_{L-S+k-1}} \big\|\sigma(g_{j}^{(L-S+k+1),k+1}) - \sigma(g_{j}^{(L-S+k+1),k})\big\|_2 \\ 
        &\leq \rho \max_{1\leq j \leq n_{L-S+k-1}} \big\|g_{j}^{(L-S+k+1),k+1} - g_{j}^{(L-S+k+1),k}\big\|_2 \\ 
        &\leq \rho \max_{1\leq j \leq n_{L-S+k-1}} \biggl[ \sum^{m_{L-S+k}}_{i=1}|\Tilde{w}_{j,s_i}^{(L-S+k)}|\big\|f_{s_i}^{(L-S+k),k} - f_{s_i}^{(L-S+k),k-1} \big\|_2 \biggr]
        \\ &\leq \rho t_{m_{L-S+k}} \max_{1\leq j \leq n_{L-S+k}} \big\|f_{j}^{(L-S+k),k} - f_{j}^{(L-S+k),k-1} \big\|_2. \numberthis \label{eq20} 
    \end{align*}
    
    Repeating \Autoref{eq20} from $k=S-2$ to $k=1$, along with \Autoref{eq18}, we have 
    \begin{align*}\label{eq:last}
        \big\|f_j^{(L-1),S-1} - f_j^{(L-1),S-2}\big\|_2 
        \leq  \rho^{S-1}t_{L-2}\ldots t_{L-S}C(m_{L-S})^{\frac{1}{2}-\frac{1}{q_{L-S}}}\max_{1 \leq j \leq n_{L-S}}\big\|f_j^{(L-S)}\big\|_2.
    \end{align*}
    Finally,
    \begin{align*}
        \big\|f_{T}^{(S)} - f_{T}^{(S-1)}\big\|_2  
        &=\big\|\sum^{m_{L-1}}_{j=1}\Tilde{w}_{1s_j}^{(L-1)} (f_{s_j}^{(L-1),S-1} -  f_{s_j}^{(L-1),S-2})\big\|_2\\ 
        &\leq \sum^{m_{L-1}}_{j=1}|\Tilde{w}_{1s_j}^{(L)}|\big\|f_{s_j}^{(L),S-1} - f_{s_j}^{(L),S-2}\big\|_2 \\
        &\leq  t_{L-1} \bnorm{f_{s_j}^{(L),S-1} - f_{s_j}^{(L),S-2}}_2 \\
        &\leq \rho^{S-1}t_{L-1}\ldots t_{L-S}C(m_{L-S})^{\frac{1}{2}-\frac{1}{q_{L-S}}}\max_{1 \leq j \leq n_{L-S}}\big\|f_j^{(L-S)}\big\|_2,
    \end{align*}
   which completes the proof by induction.

\section{Sparsity index}
    For $q\in(0,1)$ and a vector $w \in \mathbb{R}^d$, we define the \textit{sparsity index} (SI) as
    \begin{equation*}\label{eq:si}
        \SI(w) = \norm{w}_1/\norm{w}_q.
    \end{equation*}
    We show some basis properties of the sparsity index. By Jensen's inequality, we have
    \begin{align*}
        d^{-\frac{1}{q}}\norm{w}_q = \biggl(d^{-1}\sum_{i=1}^{d}\abs{w_i}^q\biggr)^{\frac{1}{q}} \leq \frac{1}{d}\sum_{i=1}^{d} \abs{w_i} = \frac{1}{d}\norm{w}_1,
    \end{align*}
    so 
    \begin{align*}
        \norm{w}_q \leq d^{\frac{1}{q}-1}\norm{w}_1.
    \end{align*}
    Furthermore, it is well-known that $\norm{w}_1 \leq \norm{w}_q$. As a result, $\SI(w) \in [d^{1-\frac{1}{q}}, 1]$, and a larger SI indicates a sparser vector. 
    
    For a near-hard sparsity scenario, let $I_m$ be the largest $m$ components of $w$, and assume $\eta$ is a constant such that $\sum_{i \notin I_m}|w_i|^q\leq \eta \sum_{i \in I_m}|w_i|^q$,
    we have
    \begin{align*}
        \norm{w}_q &= \biggl(\sum_{1 \leq i \leq d}|w_i|^q\biggr)^{\frac{1}{q}} 
        = \biggl(\sum_{i \in I_m}|w_i|^q + \sum_{i \not\in I_m}|w_i|^q\biggr)^{\frac{1}{q}} \\
        &\leq \biggl(\sum_{i \in I_m}|w_i|^q + \eta  \sum_{i \in I_m}|w_i|^q\biggr)^{\frac{1}{q}} 
        = \biggl(\sum_{i \in I_m}|w_i|^q\biggr)^{\frac{1}{q}}(1 + \eta )^{\frac{1}{q}} \\
        &\leq \biggl(\sum_{i \in I_m}|w_i|\biggr)m^{\frac{1}{q}-1} (1 + \eta )^{\frac{1}{q}} 
        \leq \norm{w}_1 m^{\frac{1}{q}-1} (1 + \eta )^{\frac{1}{q}}. 
    \end{align*}
    Rearranging the above inequality gives
    \begin{align*}
         m \geq [SI(w)]^{-q/(1-q)}(1+\eta )^{-1/(1-q)}.
    \end{align*}

\section{Proof of Theorem 4}
    We only need to prove the following counter part of the \Autoref{lem4.1}. The rest of the proof exactly follows the proof of \Autoref{thm4.1}.

    \begin{lemma}\label{lem:mag}
    Let $f$ be any function with $\big\| f \big\|_2 < \infty$. Suppose $\mathcal{F} = \{f_1, \ldots, f_M\}$ with $\max_{1 \leq j \leq M}\big\| f_j \big\|_2 < \infty$. For any $0 < q \leq 1$ and $t>0$, define the $l_{q,t}$-hull of $\mathcal{F}$ to be the class of linear combinations of functions in $\mathcal{F}$ with the $\ell_q$-constraint
    \begin{align*}
        \mathcal{F}_q(t) = \biggl\{f_\theta = \sum^m_{j=1}\theta_j f_j : \big\|\theta \big\|_q \leq t, f_j \in  \mathcal{F} \biggr\}.
    \end{align*}
    For any $1 \leq m \leq M$ and $t=\norm{w}_q$, let $f_{\theta^*} = \arg\min_{f_\theta \in \mathcal{F}_{q(t)}}\big\|f - f_{\theta}\big\|_2 =  \sum^m_{j=1}w_j f_j$, $J_m = \{j: \abs{w_j} > tm^{-1/q}\}$, and $\hat{f} = \sum_{j \in J_m}w_j f_j$. Then, we have
    \begin{align} 
        \big\|f - \hat{f}\big\|_2 \leq \big\|f - f_{\theta^*}\big\|_2 + tm^{1-1/q}\max_{1 \leq j \leq M}\big\|f_j\big\|_2.
    \end{align}
    Additionally, $\sum_{j \in J_m}\abs{w_j} \leq t$ and the cardinality of $J_m$ is no more than $m$.
    \end{lemma}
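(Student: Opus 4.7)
The plan is to reduce the lemma to three coefficient-level estimates via the triangle inequality $\|f-\hat f\|_2 \le \|f-f_{\theta^*}\|_2 + \|f_{\theta^*}-\hat f\|_2$ together with the elementary control $\|f_{\theta^*}-\hat f\|_2 = \|\sum_{j\notin J_m}w_j f_j\|_2 \le \max_{1\le j\le M}\|f_j\|_2 \sum_{j\notin J_m}|w_j|$. The three needed estimates are (i) $|J_m|\le m$, (ii) $\sum_{j\in J_m}|w_j|\le t$, and (iii) the key tail bound $\sum_{j\notin J_m}|w_j|\le tm^{1-1/q}$. Only (iii) requires any real work; the rest is bookkeeping.

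First I would establish (i) by a Markov-type count: every $j\in J_m$ satisfies $|w_j|^q > t^q/m$, while $\sum_j|w_j|^q = \|w\|_q^q = t^q$, so $|J_m|\cdot t^q/m < t^q$ and hence $|J_m|<m$. Estimate (ii) then follows from the standard monotonicity $\|w\|_1\le\|w\|_q$ for $0<q\le 1$ (an immediate consequence of the concavity of $x\mapsto x^q$, or of the power-mean inequality), since restricting the $\ell_1$-sum to $J_m$ only decreases it.

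The main step is the tail bound (iii), which I would obtain through the factorization $|w_j|=|w_j|^{1-q}|w_j|^q$: on the complement of $J_m$ each factor $|w_j|^{1-q}$ is bounded above by $(tm^{-1/q})^{1-q}$, so
\begin{align*}
    \sum_{j\notin J_m}|w_j| \le (tm^{-1/q})^{1-q}\sum_{j\notin J_m}|w_j|^q \le t^{1-q}m^{-(1-q)/q}\cdot t^q = tm^{1-1/q}.
\end{align*}
Substituting into the triangle-inequality decomposition immediately yields the stated error bound.

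The only non-routine ingredient is this tail-mass factorization that converts an $\ell_q$-constraint into an $\ell_1$-bound on the small coefficients; once it is spotted, every other step is a one-line calculation. This is why the argument is considerably shorter and cruder than the proof of Lemma~\ref{lem4.1}, whose sharper $m^{1/2-1/q}$ rate demands a genuinely finer combinatorial selection rather than a plain magnitude threshold.
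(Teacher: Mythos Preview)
Your proposal is correct and follows essentially the same route as the paper's proof: the paper likewise uses the triangle inequality $\|f-\hat f\|_2 \le \|f-f_{\theta^*}\|_2 + \|f_{\theta^*}-\hat f\|_2$, the Markov-type count $\sum_{j\in J_m} t^q/m \le \sum_{j\in J_m}|w_j|^q \le t^q$ for the cardinality bound, the monotonicity $\|w\|_1\le\|w\|_q$ for (ii), and exactly your factorization $|w_j|\le |w_j|^q(tm^{-1/q})^{1-q}$ on $J_m^c$ for the tail bound (iii). The only cosmetic difference is the order of presentation; your observation that the threshold is strict even yields $|J_m|<m$, a harmless sharpening of the paper's ``no more than $m$''.
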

    \begin{proof}
        First, we know $\sum_{j \in J_m}\abs{w_j} \leq \norm{w}_1 \leq \norm{w}_q = t$, thus
        the cardinality of $J_m$ is no more than $m$ since 
        \begin{align*}
            \sum_{j \in J_m}t^q/m \leq \sum_{j \in J_m}\abs{w_j}^q \leq \norm{w}_q^q = t^q.
        \end{align*}
        Second, we have
        \begin{align*}
            \sum_{j \notin J_m} \abs{w_j} \leq \sum_{j \notin J_m} \abs{w_j}^q(tm^{-1/q})^{1-q} \leq \norm{w}_q^q(tm^{-1/q})^{1-q} = tm^{1-1/q},
        \end{align*}
        hence
        \begin{align*}
            \bnorm{f_{\theta^*} - \hat{f}}_2 = \biggl\|\sum_{j \notin J_m} w_jf_j\biggr\|_2 \leq \sum_{j \notin J_m} \abs{w_j} \max_{1 \leq j \leq M}\big\|f_j\big\|_2 \leq tm^{1-1/q}\max_{1 \leq j \leq M}\big\|f_j\big\|_2.
        \end{align*}
        We finish the proof by plugging in the above inequality into the following triangle inequality
        \begin{align*}
            \big\|f - \hat{f}\big\|_2 \leq \big\|f - f_{\theta^*}\big\|_2 + \big\| f_{\theta^*}- \hat{f}\big\|_2.
        \end{align*}
    \end{proof}

\bibliographystyle{plainnat}
\bibliography{ref}

\end{document}